\documentclass[letterpaper, 10 pt, conference]{ieeeconf}  

\IEEEoverridecommandlockouts                              

\usepackage{graphics} 
\usepackage{url}
\usepackage{booktabs}
\usepackage{theorem}
\usepackage{amsmath, amsfonts} 
\usepackage{amssymb}  
\usepackage{mathtools}
\usepackage{multicol}
\usepackage{graphicx}
\usepackage{layouts}
\usepackage{wrapfig}
\usepackage{graphicx}
\usepackage{easyReview}
\usepackage{printlen}
\usepackage{bm}
\usepackage{siunitx}
\usepackage{colortbl}	
\usepackage[utf8]{inputenc}
\usepackage{pgf}
\usepackage{tikz}
\usepackage{tikzscale}
\usepackage{pgfplots}
\usepackage{array}
\DeclareUnicodeCharacter{2212}{\textendash}
\pgfplotsset{compat=1.14}
\usepgfplotslibrary{external}
\usepgfplotslibrary[external]
\tikzset{external/force remake}
\usepackage[vlined,ruled,linesnumbered,procnumbered]{algorithm2e}
\usepackage{algpseudocode}
\usepackage{makecell}
\usepackage{hyperref}
\usepackage[dvipsnames]{xcolor}
\usepackage[caption=false,font=footnotesize]{subfig}
\usepackage{footnote}
\makesavenoteenv{tabular}
\newtheorem{remark}{\textbf{Remark}}

\newtheorem{theorem}{Theorem}

\usepackage{multirow}

\def\*#1{\mathbf{#1}}
\def\b#1{\boldsymbol{#1}}

\newcommand{\rev}[1]{\textcolor{black}{{#1}}}
\title{\LARGE \bf
Hippo: 
\underline{H}igh-performance \underline{I}nterior-\underline{P}oint and \underline{P}rojection-based Solver for Generic Constrained Trajectory \underline{O}ptimization

\author{
Haizhou Zhao$^{1, 2}$
 , Ludovic Righetti$^{1}$
 , Majid Khadiv$^{2}$
}
\thanks{$^{1}$Tandon School of Engineering, New York University, USA, {\tt\small haizhou.zhao@nyu.edu,ludovic.righetti@nyu.edu}\
$^{2}$Munich Institute of Robotics and Machine Intelligence (MIRMI), Technical University of Munich (TUM), Germany {\tt\small majid.khadiv@tum.de}
}
}

\begin{document}

\maketitle
\thispagestyle{empty}
\pagestyle{empty}


\begin{abstract}
	Trajectory optimization is the core of modern model-based robotic control and motion planning. Existing trajectory optimizers, based on sequential quadratic programming (SQP) or differential dynamic programming (DDP), are often limited by their slow computation efficiency, low modeling flexibility, and poor convergence for complex tasks requiring hard constraints. In this paper, we introduce \textit{Hippo}\footnote{Github link: https://github.com/HaizhouZ/moto}, a solver that can handle inequality constraints using the interior-point method (IPM) with an adaptive barrier update strategy and hard equality constraints via projection or IPM. Through extensive numerical benchmarks, we show that \textit{Hippo} is a robust and efficient alternative to existing state-of-the-art solvers for difficult robotic trajectory optimization problems requiring high-quality solutions, such as locomotion and manipulation.
\end{abstract}

\section{Introduction}
Direct multiple shooting (DMS) plays an important role in modern robotic trajectory optimization due to its superior robustness and faster convergence \cite{liftednewton}. Several approaches enable faster resolution of the problem by exploiting the temporal sparsity structure in optimal control problems (OCP), using Riccati recursion \cite{mastalli20crocoddyl, ocs2constr, proxddp} or block elimination \cite{stagewisesqp, parallelandproximalLQR}, which are fundamentally equivalent \cite{stagewisesqp}. Although many of them are successfully deployed for Model Predictive Control (MPC), they often fail to solve large-scale or difficult OCPs due to poor convergence and slow computation. These limitations hinder their reliable application to data generation for behavior cloning (BC) or task and motion planning (TAMP). In this paper, we propose a trajectory optimizer designed to address these issues.

\subsection{Related Works\label{sec:relatedwork}}

Popular constraint handling techniques are the interior point method (IPM) \cite{Verschueren2021acados, sotaronewtonmethod,interiorpointddp} and the augmented Lagrangian method (ALM) \cite{ocs2constr,constrainedddprevisited, proxddp,stagewisesqp,parallelandproximalLQR}. Both have homotopy parameters (i.e., barrier parameters for IPM and penalty parameters for ALM) iteratively reduced to recover the solution to the original OCP. 

IPM converts inequality constraints to slacked equality constraints and relaxes complementary slackness in KKT conditions with an evolving barrier parameter. \textit{CALIPSO} \cite{calipso} and \textit{robotoc} \cite{sotaronewtonmethod} implement a monotonic update rule of IPM, which reduces the barrier parameter when the KKT residual is lower than a threshold. However, it can be sensitive to the choice of initial barrier parameter \cite{ipopt_adaptive_barrier}. \textit{acados} \cite{Verschueren2021acados} is an SQP solver based on \textit{HPIPM} \cite{hpipm}, an IPM-based QP solver with Mehrotra predictor-corrector (MP-C) adaptive barrier update strategy \cite{mehrotra-pc}. It represents equality constraints as box-constrained inequalities with zero boundaries. In each SQP iteration, it calls HPIPM to solve the stagewise linear-quadratic (LQ) subproblem to an acceptable accuracy. Recently, \textit{piqp} \cite{schwan2023piqp} combined the proximal method of multipliers (similar to ALM) and IPM using MP-C, and achieved state-of-the-art performance in \textit{qpbenchmark} \cite{qpbenchmark}.                                       

ALM transcribes constrained optimization to an unconstrained one by penalizing constraint violation. Specifically, \textit{aligator} \cite{proxddp} and \textit{mim\_solver} \cite{stagewisesqp} employ an ALM-like algorithm in a semi-smooth Newton form, a variant of the primal-dual active-set strategy (PDAS) \cite{pdas}. The main problem with ALM is its poor convergence in solving inequality-constrained OCPs due to the explicit detection of active sets \cite{interiorpointddp}, and it can easily get trapped in local minima. In contrast, IPM enjoys polynomial convergence complexity thanks to the self-concordance of the barrier design \cite{ipm_polynomial}. Another issue with ALMs is the penalty parameter scheduling scheme, which remains an open question \cite{osqp}. On the contrary, IPM has systematic adaptive barrier update strategies such as MP-C and quality functions \cite{ipopt_adaptive_barrier}.

The two-stage exact minimization framework of \textit{acados} and \textit{mim\_solver} approximately solves the LQ subproblem in inner loops. For ALM-like methods, this can sometimes be helpful since it increases the chance of finding active sets using cheap recursion over only the value function Jacobians. For IPM, such inner loops can be computationally expensive since the Hessian factorization cannot be reused due to the varying slack and multipliers. Moreover, for nonlinear optimization, such a framework introduces additional solver parameters to control the termination of the inner loops, such as the subproblem tolerances. These parameters can be highly problem-specific and require careful hand-tuning. 

IPM and ALM share common sparsity in their stagewise KKT system and thus can be solved in a similar routine: during the recursion, only the primal steps are computed; later in the post-recursion stage, dual steps are reconstructed in parallel \cite{sotaronewtonmethod, stagewisesqp}. In \cite{proxddp}, it is claimed that this splitting suffers from poor numerical conditioning due to inversion of diagonal coefficients, which may reach extreme values upon quasi-convergence. Instead, \cite{proxddp} chose to use Cholesky LDLT factorization suitable for semi-positive-definite KKT linear systems, at the cost of slower computation due to the high complexity of LDLT and matrix-matrix multiplication with higher dimension. 
To further speed up the optimization, \textit{ocs2} \cite{Farshidian2017RealtimeMP} parallelizes the sequential recursion by breaking the causal temporal structure of OCP into several segments so that recursion over each segment can start with the information from the last iteration. This corresponds to the classical \textit{parallel-in-time} integration to solve initial value problems \cite{parareal}. Despite the speed improvement, this inconsistency can slow down convergence or even lead to divergence. \textit{aligator} \cite{parallelandproximalLQR} splits the trajectory using the segmental sensitivities w.r.t. the co-states, i.e., the dynamics multipliers at the breakpoints. Although its solution is exact and will not affect convergence, the computation of extra sensitivities and factorization render the speed improvements less promising.

Apart from specialized OCP solvers, \textit{IPOPT} \cite{ipopt_implementation} is recognized as a generic robust nonlinear programming (NLP) solver and widely used to solve complex multi-contact trajectory optimization problems \cite{taouil2025physicallyconsistenthumanoidlocomanipulation,jin2025complementarityfreemulticontactmodelingoptimization, ciebielski2025taskmotionplanninghumanoid}. In contrast with \textit{acados} that requires two-stage exact minimization, \textit{IPOPT} computes the primal-dual Newton step at each SQP iteration with powerful globalization, including feasibility restoration phases and filter line search. Despite its robustness, \textit{IPOPT} does not exploit the temporal structure in OCPs, which makes it slow and limits its application in robotics. \rev{Recently, \textit{fatrop} \cite{fatrop} was proposed to mimic \textit{ipopt} while exploiting the KKT block-sparsity of OCPs. However, it is restricted to explicit dynamics and lacks effective parallelism. A summary of the aforementioned solvers is shown in Table. \ref{tab:solveattr}.}
\begin{table}[t]
    \centering
    \rev{
    \caption{Solver Attributes}
    \begin{minipage}{\textwidth}
    \begin{tabular}{lp{30pt}ccc}
        \hline
         \textbf{Solver} & \textbf{Parallelized} & \textbf{Eq.}\footnote{\rev{Eq./Ineq. - Equality/Inequality constraint handling.}} & \textbf{Ineq.} & \textbf{Dynamics} \\
         \hline
         \textit{Hippo} (ours) & yes & proj.\footnote{\rev{projection-based handling.}}/IPM & IPM & impl. \\
         \textit{fatrop} \cite{fatrop} & no & proj. & IPM & expl. \\
         \textit{acados} \cite{Verschueren2021acados} & no & IPM & IPM & impl. \\
         \textit{aligator} \cite{proxddp,parallelandproximalLQR} & yes & ALM & ALM & impl. \\
         \textit{mim\_solver} \cite{stagewisesqp} & yes & ALM & ALM & expl.\\
         \hline
    \end{tabular}
    \label{tab:solveattr}
    \vspace{-1em}
    \end{minipage}}
    \vspace{-2em}
\end{table}
\subsection{Main Contribution}
In this paper, we propose a simplistic solver implementation of regularized IPM and projection-based constrained SQP, namely \textit{Hippo}. \rev{It supports generic OCP formulations including implicit dynamics and cross-timestep pure-state constraints.} Similar to \textit{IPOPT}, at each SQP iteration, \textit{Hippo} directly solves the approximated QP for the primal-dual Newton step and handles hard equality constraints through either projection or IPM. Inspired by \cite{schwan2023piqp,ipopt_adaptive_barrier}, we use the MP-C scheme with a safeguard on the trial complementarity residuals.  We show that \textit{Hippo} can achieve robust convergence for difficult trajectory optimization problems. Furthermore, with a minimal number of tuning parameters and simple yet effective globalization, \textit{Hippo} does not require careful hand tuning of solver parameters. Our systematic comparison with state-of-the-art solvers demonstrates the superiority of \textit{Hippo} both in terms of convergence speed and number of problems it can solve.
\section{Constrained Optimal Control}\label{sec:eq_ocp}
\textbf{Notations}: Throughout this paper, bold characters denote vector/matrix values and normal characters denote scalars. We use $F_\*a$ to represent $\frac{\partial F}{\partial \*a}$, and $F_\*{ab}$ to represent $\frac{\partial^2F}{\partial \*a\partial \*b}$. $\delta \*a$ denotes a change in $\*a$. $(\cdot)$ is used as a placeholder for symbols.

In this paper, we aim to solve the following \rev{generic} OCP for a given initial state $\*x_0$
\begin{subequations}\label{eq:ocp}
\vspace{-0.7em}
	\begin{align}
		\min_{\tiny{\begin{matrix}\*{x}_{1\cdots N}\\\*u_{0\cdots N-1}\end{matrix}}}~~l_N(\mathbf{x}_N)&+\sum_{k=0}^{N-1}l_k(\mathbf{x}_k,\mathbf{u}_k),	\\
		\text{s.t.}~~~\forall k\in[0, N-1]:&~~\mathbf{f}(\mathbf{x}_k,\mathbf{u}_k,\mathbf{x}_{k+1})=\mathbf{0},\label{eq:impdynamics}\\
		&~~\mathbf{c}(\*x_k,\*u_k)=\*0,\label{eq:stateinputconstr}\\
		&~~\boldsymbol{\psi}(\mathbf{x}_k,\mathbf{u}_k)\preceq\mathbf{0},\label{eq:stateinputineq}\\
				&~~\*s(\*x_k, \*x_{k+1})=\*0,\label{eq:stateconstr}\\
		&~~\boldsymbol{\phi}(\*x_k,\*x_{k+1})\preceq\*0,\label{eq:stateineq}
	\end{align}
\end{subequations}
where $N$ is the number of shooting nodes, $\*x_k,\*u_k$ are the state and input of the $k$th stage, $\*x_N$ is the terminal state, $l_N, l_k$ are terminal and state-input running costs,  $\*f, \*c \text{ and } \*s$ specify implicit discrete dynamics (suitable for general integrators), stacked state-input and state-only constraints, respectively. $\b\psi$ denotes the state-input inequality constraints, and $\b\phi$ denotes state-only inequality constraints. 

To simplify the notation, we omit the subscript $k$ in the rest of the paper, and use a triplet $(\*x,\*u,\*y)$ to represent the current state, input, and next state of the $k$th stage, i.e., $\*y_{k}\equiv\*x_{k+1}$ for $k\in[0,N-1]$ and $\*y_{N-1}\equiv\*x_N$. In addition, the following stagewise notations will be used unless specified:
\begin{equation}\nonumber
\vspace{-1em}
\begin{array}{ll}
    \*h=[\*f^\top,\*s^\top,\*c^\top]^\top &\text{stacked equality constraints}\\
     \boldsymbol{\lambda}=[\boldsymbol{\lambda}^\top_f,\boldsymbol{\lambda}^\top_c,\boldsymbol{\lambda}^\top_s]^\top &  \text{multipliers of $\*h$}\\
     \*w = [\*x^\top, \*u^\top, \*y^\top]^\top & \text{stacked primal variables}\\
     \*g=[\b\phi^\top,\b\psi^\top]^\top & \text{stacked inequality constraints} \\
     \b\nu=[\b\nu_{\b\phi}^\top,\b\nu_{\b\psi}^\top]^\top & \text{multipliers of $\*g$}
\end{array}
\vspace{0.9em}
\end{equation}

In this section, we introduce the efficient projection-based Riccati recursion to solve the equality-constrained OCP without inequalities. In Sec. \ref{sec:ipm} we then explain how inequalities are handled with IPM.
The overall workflow of the projection-based recursion is shown in Algorithm. \ref{algo:single_iter_sqp}.
\subsection{Riccati Recursion}
\subsubsection{Intermediate stages} Bellman's principle of optimality defines a value function $V$ for each stage $k\in[0,N-1]$:
\begin{equation}\label{eq:bellmanopt}
		V(\*{x})=\min_{\*u,\*y}~l(\*x,\*u)+V(\*{y}),~ \text{s.t.}~\*h(\*w)=\*0
\end{equation}
which itself can be viewed as a constrained optimization problem. 
Similar to previous works \cite{proxddp}, we introduce the stagewise Lagrangian $Q$-function of \eqref{eq:bellmanopt} as
\begin{equation}\label{eq:Qfunc}
	\begin{aligned}
	Q&(\*w,\boldsymbol{\lambda})=l(\mathbf{x},\mathbf{u})+V(\*y)+\b\lambda^\top\*h(\*w)\\
	\end{aligned}
\end{equation}
In this work, we adopt a Gauss-Newton-like LQ approximation of \eqref{eq:Qfunc} as \cite{proxddp}, i.e., 1) all constraints are linearly approximated, and 2) the cost and value function $V$ are quadratically approximated. The corresponding KKT conditions can be written in the following recursive form
\begin{subequations}\label{eq:originalkktsys}%
	\begin{gather}
		\mathcal{K}\delta \boldsymbol{\xi}=-{\*k}_0-{\*K}_0\delta \*x,\label{eq:sparsekktmatrix}\\
    \begingroup
    \setlength{\arraycolsep}{2pt}
        \mathcal{K}=\begin{bmatrix}
		{Q}_\*{uu}&\rev{\*0}&\*{h}_\*u^\top\\
		\rev{\*0}&{Q}_\*{yy}&\*h_\*y^\top\\
		\*h_\*u&\*h_\*y&\rev{\*0}
	\end{bmatrix},\endgroup
		\*k_0=\begin{bmatrix}
		    Q_{\*u}^\top\\Q_{\*y}^\top\\\*h
		\end{bmatrix},
        \*K_0=\begin{bmatrix}
		    Q_{\*{ux}}\\Q_{\*{yx}}\\\*h_\*x
		\end{bmatrix}.
	\end{gather}
\end{subequations}
where $\boldsymbol{\xi}=[\*u^\top,\*y^\top,\boldsymbol{\lambda}^\top]^\top$. By solving \eqref{eq:originalkktsys}, its solution is
\begin{equation}
    \delta\b\xi=\underbrace{-\mathcal{K}^{-1}\*k_0}_{\*k_{\b\xi}}\underbrace{-\mathcal{K}^{-1}\*K_0}_{\*K_{\b\xi}}\delta\*x
\end{equation}
where $\*k_{\b\xi}$ and $\*K_{\b\xi}$ are respectively the zero and first order sensitivity of $\delta \b\xi$. Specially, $\*K_{\b\xi}=\delta\b\xi/\delta\*x\equiv\partial {\b\xi}/\partial\*x$. Note that in the later text, $\*k/\*K_{(\cdot)}, (\cdot)\neq 0$ refer to zero/first-order sensitivities of $(\cdot)$.
\subsubsection{Terminal node}\label{sec:recursion_rough_intro}
The value function of the terminal state $\*x_N$ is the terminal cost, i.e., $V_N(\*x_N)\equiv l_N(\*x_N)$. Since $\*y_k\equiv\*x_{k+1}$, it can be seen that $V_\*{yy}^k\equiv V_\*{xx}^{k+1}, V_\*{yy}^{N-1}\equiv V_\*{xx}^N$ and so is the Jacobian of $V$. The Riccati recursion can then be performed backward: starting from the terminal node $V_\*N$ derivatives, \eqref{eq:originalkktsys} is solved sequentially for $\delta\b\xi$ sensitivities; $V(\*y)$ derivatives of the previous stage are updated by the chain rule of differentiation
\begin{equation}\label{eq:rawvaluefuncderivative}
    \hspace{-1em}
    \begin{aligned}
        &V_\*{y}^{k-1}=\rev{V_\*{x}^k}= Q_{\*{x}}^k+(\*Q_{\b\xi}^\top\*K_{\b\xi})^k=Q_{\*{x}}^k+(\*k_0^\top\*K_{\b\xi})^k\\
        &V_\*{yy}^{k-1}=\rev{V_\*{xx}^k}= Q_{\*{xx}}^k+(\*Q_{\b\xi\*x}^\top\*K_{\b\xi})^k=Q_{\*{xx}}^k+(\*K_0^\top\*K_{\b\xi})^k
    \end{aligned}
\end{equation}

\subsection{Projection-based Factorization}\label{sec:nullspacesolvekkt}
By the definitions of $\*h$ and $\b\lambda$, the KKT system \eqref{eq:originalkktsys} can be expanded into
\begin{equation}\label{eq:sparsekktlinsys}
	\begin{bmatrix}
		Q_\*{uu}&\rev{\*0}&\*f_\*{u}^\top&\rev{\*0}&\*c^\top_\*u\\
		\rev{\*0}&Q_\*{yy}&\*f_\*y^\top&\*s_\*y^\top&\rev{\*0}\\
		\*f_\*u&\*f_\*y&\rev{\*0}&\rev{\*0}&\rev{\*0}\\
		\rev{\*0}&\*s_\*y&\rev{\*0}&\rev{\*0}&\rev{\*0}\\
		\*c_\*u&\rev{\*0}&\rev{\*0}&\rev{\*0}&\rev{\*0}
	\end{bmatrix}
	\begin{bmatrix}
		\delta\*u\\\delta\*y\\\delta\boldsymbol{\lambda}_f\\\delta\boldsymbol{\lambda}_s\\\delta\boldsymbol{\lambda}_c
	\end{bmatrix}=-
	\begin{bmatrix}
			\*u_0\\\*y_0\\\*f_0\\\*s_0\\\*c_0
	\end{bmatrix}.
\end{equation}
where the RHS are generic terms in $\*k_0$ or $\*K_0$. Null-space projection can be used to solve the above saddle-point problem. \rev{The invertibility of $\*f_\*y$ holds for common integrator dynamics in robotics, such as generic explicit/implicit Euler/RK4 integrators. Therefore, assuming invertible $\*f_\*y$, }the following can then be introduced to facilitate later factorization of \eqref{eq:sparsekktlinsys}:
\begin{equation}
\*F_0:=\*f_\*y^{-1}\*f_0,\*F_\*u:=\*f_\*y^{-1}\*f_\*u,\label{eq:proj_dyn_derivative}
\end{equation} 
whereby the nullspace basis $[\*Z_\*u^\top,\*Z_\*y^\top]^\top$ of the LHS equality constraint Jacobians $\*h_{\*u,\*y}:=[\*h_{\*u},\*h_{\*y}]$ (the $3\times2$ bottom left corner of \eqref{eq:sparsekktlinsys}) can be derived as
\begin{equation}\label{eq:nullspace_kernels}
    \rev{
    \*Z_\*u=\mathcal{N}\bigg(\begin{bmatrix}
        -\*s_\*y\*F_\*u\\
        \*c_\*u
    \end{bmatrix}\bigg), \*Z_\*y=-\*F_\*u\*Z_\*u.
    }
\end{equation}
where $\*Z_\*u, \*Z_\*y$ are respectively the \rev{nullspace basis $\mathcal{N}$} of the constraint Jacobians corresponding to $\*u,\*y$. We can then split the primal solution $\delta\*u,\delta\*y$ to \eqref{eq:sparsekktlinsys} by
\begin{equation}\label{eq:nullspace_decomposed_step}
    \begin{bmatrix}
        \delta\*u\\\delta\*y
    \end{bmatrix}=\begin{bmatrix}
        \*Z_\*u \\ \*Z_\*y
    \end{bmatrix} \delta \*z - \begin{bmatrix}
        \delta\hat{\*u}\\\delta\hat{\*y}
    \end{bmatrix}
\end{equation}
where $\delta \*z$ is the $(\*u,\*y)$ solution to \eqref{eq:sparsekktlinsys} in the nullspace of $\*h$ and \rev{$[\delta \hat{\*u}^\top,\delta \hat{\*y}^\top]^\top: \*h_{\*u,\*y}[\delta \hat{\*u}^\top,\delta \hat{\*y}^\top]^\top=\*h_0$ is the pseudoinverse} solution of the KKT primal feasibility condition. Similar to \cite{nullspacecrocoddyl}, by substituting \eqref{eq:nullspace_decomposed_step} into \eqref{eq:sparsekktlinsys} and left-multiplying the result by $[\*Z_u^\top,\*Z_y^\top]$, \eqref{eq:sparsekktlinsys} can be projected into the nullspace of $\*h$ as
\begin{subequations}\label{eq:nullspace_kkt}
    \begin{align}
        Q_\*{zz}\delta \*z&~=-\*z_0, \rev{where}\\
        Q_\*{zz}&:=\*Z_\*u^\top Q_\*{uu}\*Z_\*u+\*Z_\*y^\top Q_\*{yy}\*Z_y\\
        \*z_0&:=\*Z_\*u^\top\bar{\*u}_0+\*Z_\*y^\top\bar{\*y}_0\\
        \bar{\*u}_0&:=\*u_0-Q_\*{uu}\delta \hat{\*u}\\
        \bar{\*y}_0&:=\*y_0-Q_\*{yy}\delta \hat{\*y}
    \end{align}
\end{subequations}
This reduced system is applicable to various constrained dynamics, particularly those involving contact dynamics. When no $\*s,\*c$ is involved, $\*Z_\*u\equiv\*I$ and $\*Z_\*y\equiv-\*F_\*u$, $\delta\hat{(\cdot)}=0$, constituting classical Riccati recursion such as in \cite{mastalli20crocoddyl}.

By solving \eqref{eq:nullspace_kkt} for $\delta\*z$, we can recover the full primal step by \eqref{eq:nullspace_decomposed_step} and the dual step by \rev{solving of the KKT stationarity condition} after substituting in $\delta \*u,\delta\*y$:
\begin{equation}\label{eq:nullspace_solve_lambda}
    \rev{\*h_{\*u,\*y}^\top\delta \b\lambda=-\begin{bmatrix}
        \*u_0+Q_\*{uu}\delta\*u\\\*y_0+Q_\*{yy}\delta \*y
    \end{bmatrix}.}
\end{equation}
\begin{remark}\label{rem:rank_def_constr_jac} (Rank deficient constraint Jacobians) 
    We use Eigen \cite{eigenweb} full pivoting LU factorization to compute \eqref{eq:nullspace_kernels} and the pseudoinverse solutions of the primal and dual steps. It features a threshold-based decision of nonzero pivots; thus, it can be relatively robust against poor numerical conditions. In practice, especially with contacts, a specially regularized factorizer such as the one in \cite{carpentier2021proximal} can be employed to obtain bounded approximate solutions and nullspace projectors. When the constraint Jacobians are highly rank-deficient, regularization might be insufficient to get valid Newton steps, and the solver can converge to infeasible stationary points. In such cases, globalization strategies should be employed to escape such situations \cite{ipopt_implementation}.
\end{remark}
\begin{remark} (Recursive nullspace projection)
    The derivation of the nullspace basis in \eqref{eq:nullspace_kernels} is inspired by the classical hierarchical QP-based whole body controllers \cite{hqp2014}. It can be further extended if parts of the projected $\bar{\*h}_u$ are invertible, such as the case of lifted inverse dynamics with contacts, where one can use the inverse of the Delassus matrix to further derive an analytical nullspace projector.
\end{remark}
\begin{algorithm}[h]
\caption{Single Iteration of SQP\label{algo:single_iter_sqp}}
\SetKwBlock{DoParallel}{in parallel for each node}{end}

\KwData{Initial trajectory of $\*x,\*u$, cost functions $l_k, l_N$, dynamics $\*f$, constraints $\*c, \*s, \boldsymbol{\psi}, \boldsymbol{\phi}$ and their multipliers}

\DoParallel(\Comment{Pre-solving steps}){
    Update LQ approximation of the OCP.
    
    Nullspace factorization to compute \eqref{eq:nullspace_kernels}, $\delta \hat{(\cdot)}$ in \eqref{eq:nullspace_decomposed_step} and the precomputable parts of \eqref{eq:nullspace_kkt},\eqref{eq:nullspace_value_func_prop}. 
    
    Jacobian/Hessian modification by IPM \eqref{eq:ipm_q_derivative_increment}.
}
\For(\Comment{Backward recursion}){$k=N-1$ \KwTo $0$}{
    $V_{\*{y}}\leftarrow V_{\*{x}}^{k+1}$, $V_{\*{yy}}\leftarrow V_{\*{xx}}^{k+1}$.
    
    $Q_\*{y}\mathrel{+}= V_{\*{y}},Q_\*{yy}\mathrel{+}=V_{\*{yy}}$.
    
    Update $Q_\*{zz}$ and $\bar{\*y}_0$ for $\*k_0,\*K_0$ respectively.
    
    Solve nullspace KKT system \eqref{eq:nullspace_kkt} for $\*K_\*z$.
    
    Update $V_\*{xx}^k$, $V_\*{x}^k$ using \eqref{eq:nullspace_value_func_prop}.}

\DoParallel(\Comment{Post-solving steps}){
Compute $\*k_\*z, \*k_\*y,\*K_\*y$.}

\For{$k=0$ \KwTo $N-1$}{
$\delta\*x_{k+1}\leftarrow\delta\*y_k\mathrel=\*k_\*y^k+\*K_\*y^k\delta\*x^k$.}
\DoParallel(\Comment{Post-rollout steps}){
$\delta \*z=\*k_\*z+\*K_\*z\delta\*x$.

$\delta \*u=\*Z_\*u\delta\*z-\delta\hat{\*u}$.

Update $\b\lambda$ (if necessary).
}
\vspace{-0em}
\end{algorithm}
\subsection{Value Function Derivative Propagation}
Naively updating $V$ derivatives by \eqref{eq:rawvaluefuncderivative} requires computing full primal and dual first-order sensitivities, leading to high computational cost in the recursion stage. However, the residual-sensitivity-products (RSPs) $\*k_0^\top\*K_{\boldsymbol{\xi}}, \*K_0^\top\*K_{\boldsymbol{\xi}}$ can be transformed into a more computationally efficient form by exploiting the projection. 

Let superscript $l$ denote the quantities associated with the LHS of an RSP and $r$ denote those associated with $\*K_{\b\xi}$. For example, $\*u_0^l$ is $Q_\*u^\top$ in $\*k_0$ for $V_\*x$; $\delta \hat{\*u}^r, \delta \hat{\*y}^r$ are the \rev{pseudoinverse} solutions with RHS being $\*K_0$. By \eqref{eq:nullspace_solve_lambda} and $\*h_{\*u,\*y}[\delta \hat{\*u}^\top,\delta \hat{\*y}^\top]^\top=\*h_0$, the dual part of an RSP can be rewritten as
\begin{equation}
    \begin{aligned}
    \*h_0^{l,\top}\delta\b\lambda^r
    =&-\begin{bmatrix}
        \delta\hat{\*u}^{l}\\\delta\hat{\*y}^{l}
    \end{bmatrix}^\top\begin{bmatrix}
        \*u_0^r+Q_\*{uu}\delta\*u^r\\\*y_0^r+Q_\*{yy}\delta \*y^r
    \end{bmatrix}
    \end{aligned}
\end{equation}
By merging it with the primal part of the RSP, and splitting $\delta\*u^r,\delta\*y^r$ into the nullspace and \rev{pseudoinverse} steps by \eqref{eq:nullspace_decomposed_step}, we have that \rev{for the current stage}, $\forall (\cdot) \in \{\*x, \*{xx}\}$,
\begin{equation}
    \begin{aligned}\label{eq:nullspace_value_func_prop}
        V_{(\cdot)}=Q_{(\cdot)}+\*z_0^{l,\top}\delta\*z^r
        &-\bar{\*u}_0^{l,\top}\delta\hat{\*u}^{r}-\bar{\*y}_0^{l,\top}\delta\hat{\*y}^{r}\\
        &-\delta\hat{\*u}^{l,\top}\*u_0^r-\delta\hat{\*y}^{l,\top}\*y_0^r.
    \end{aligned}
\end{equation}
\rev{which is the $V_{\*y/\*{yy}}$ of the previous stage as in \eqref{eq:rawvaluefuncderivative}.} In this way, we only need to compute the sensitivities in the nullspace during the recursion.
Note that the parts unrelated to \rev{value function derivatives} can be precomputed in parallel.

\subsection{Linear Forward Rollout}
To construct the full Newton step of all decision variables, linear forward rollouts are performed initially for $\delta \*y$ using $\*k_\*y,\*K_\*y$. The rest primal and dual steps are later constructed in parallel using the updated $\delta \*x$ (from $\delta \*y$). Note that the nonlinear rollout widely used in DDPs is not applied here since a linear rollout reconstructs the true Newton step and enjoys more solid theoretical convergence guarantees \cite{stagewisesqp}.
\section{Sparsity-Preserving Constraint Handling}\label{sec:ipm}
In this section, we introduce a sparsity-preserving parallelizable regularized IPM implementation for both inequality constraints and rank-deficient equality constraints.
\subsection{Regularized Primal-Dual Interior Point Method}\label{sec:primal_dual_ipm}
Extending the equality constrained OCP to the full problem in \eqref{eq:ocp} with inequality constraints $\*g$ results in a new Lagrangian $\tilde{Q}$ and its corresponding LQ-approximated KKT conditions
\begin{subequations}\label{eq:ineq_new_kkt}
\vspace{-0.7em}
    \begin{align}
        \tilde{Q}(\*w,\b\lambda,\b\nu):=Q(\*w,\b\lambda)+\b\nu^\top\*g(\*w),\label{eq:ineq_lagrangian_q_func}\\
        \begin{bmatrix}
            \tilde{\mathcal{K}}&\*g_{\b\xi}^\top\\
            \*g_{\b\xi}&\rev{\*0}
        \end{bmatrix}\begin{bmatrix}
            \delta\b\xi \\ \delta\b\nu
        \end{bmatrix}=-\begin{bmatrix}
            \tilde{\*k}_0\\\*g
        \end{bmatrix}-\begin{bmatrix}
            \tilde{\*K}_0\\\*g_\*x
        \end{bmatrix},\\
        \*h+\*h_\*w\delta\*w=\*0,\\
        \*g+\*g_\*w\delta\*w\leq\*0,\label{eq:ineq_residual_feasibility}\\
        \b\nu\odot\*g=\*0,\label{eq:comp_slack_origin}
    \end{align}
\end{subequations}
where $\b\nu$ is the stacked inequality multiplier of $\*g$, $\tilde{\mathcal{K}}, \tilde{\*k}_0, \tilde{\*K}_0$ are defined similarly to \eqref{eq:originalkktsys} but w.r.t. $\tilde{Q}$, $\odot$ denotes element-wise product. Note that $\*g_{\b\lambda}=\*0$.
\eqref{eq:ineq_new_kkt} can be transformed into an equality-constrained problem via the primal-dual interior point method, by introducing slack variables $\*t$ into \eqref{eq:ineq_residual_feasibility} and relaxing the complementary slackness \eqref{eq:comp_slack_origin} with a barrier parameter $\mu$, i.e., the \textit{perturbed KKT condition}
\begin{subequations}\label{eq:perturbedKKT}
\vspace{-0.5em}
	\begin{align}
		&\*r_g:=\*g+\*t,&&\*r_s:=\boldsymbol{\nu}\odot\*t-\mu{\*1},&&\label{eq:ipm_res}\\
		&\*r_g+\*g_\*w\delta\*w=\*0,&&\*r_s=\*0,&&\boldsymbol{\nu},\*t\succeq\*0,
	\end{align}
\end{subequations}
which is a bilinear system due to $\b{\nu}\odot\*t$. Linearizing \eqref{eq:perturbedKKT} w.r.t. $\b\nu,\*t$ results in the following new KKT system
\begin{equation}\label{eq:ipmkktlinearequation}
	\begin{aligned}
		&\begin{bmatrix}
			\tilde{\mathcal{K}}&\*g_{\boldsymbol{\xi}}\top&\rev{\*0}\\
			\*g_{\boldsymbol{\xi}}&-\rho\*I&\*I\\
			\rev{\*0}&\*T&\*N
		\end{bmatrix}\begin{bmatrix}\delta{\boldsymbol{\xi}}\\\delta\boldsymbol{\nu}\\\delta\*t\end{bmatrix}=-\begin{bmatrix}
			\tilde{\*k}_0\\\*r_g\\\*r_s
		\end{bmatrix}-\begin{bmatrix}
			\tilde{\*K}_0\\
			\*g_{\*x}\\
			\*0
		\end{bmatrix}\delta \*x,
	\end{aligned}
\end{equation}
where $\*T=\text{diag}(\*t)$, $\*N=\text{diag}(\boldsymbol{\nu})$, $\rho$ is the multiplier regularization parameter to avoid ill-conditioning when $\*t\to 0$. We can then exploit the structure of \eqref{eq:ipmkktlinearequation} to split the solution into a $\b\xi$-step and a $(\b\nu,\*t)$-step:
\begin{subequations}\label{eq:ipm_split}
\vspace{-0.5em}
    \begin{align}
        \delta \b\xi&=\hat{\tilde{\mathcal{K}}}^{-1}(-\hat{\tilde{\*k}}_0 - \hat{\tilde{\*K}}_0\delta \*x)\\
        \delta \b\nu&=\*T_\rho^{-1}[-\*r_s - \*N(\*r_g+\*g_{\*w}\delta \*w)]\\
        \delta \*t&=-\*r_g-\*g_{\*w}\delta \*w+\rho\delta\b\nu
    \end{align}
\end{subequations}
where $\*T_\rho:=\*T+\rho\*N$, $\hat{\tilde{(\cdot)}}$ denotes modified quantities due to the modification of the $\tilde{Q}$ Jacobian and Hessian, which is represented as incremental values ($\mathcal{W}:=\{\*x,\*u,\*y\}$):
\begin{subequations}\label{eq:ipm_q_derivative_increment}
	\begin{align}
    &\forall {(\cdot)}\in \mathcal{W}:
        \Delta \tilde{Q}_{(\cdot)}=[\*T_\rho^{-1}\*N\*g+\mu\*T_\rho^{-1}]^\top\*g_{(\cdot)},\label{eq:jacobian_mod_ipm}\\
     &       \forall {(\cdot)}\in \mathcal{W}\times \mathcal{W}:
		\Delta \tilde{Q}_{{(\cdot)}}=\*{g}_{(\cdot)}^\top\*T_\rho^{-1}\*N\*g_{(\cdot)}.
	\end{align}
\end{subequations}
Due to the diagonal nature of $\*T_\rho^{-1}\*N$ and the definition of $\*g$, ${Q}_{\*{uy}}$ will still be zero. Therefore, the sparsity in \eqref{eq:sparsekktlinsys} is preserved. With the splitting in \eqref{eq:ipm_split}, $(\b\nu,\*t)$-steps of different shooting nodes can be computed in parallel after the sequential forward rollout instead of solving the whole KKT system \eqref{eq:ipmkktlinearequation} directly.
\begin{theorem}\label{th:ipm_alm}
Assuming non-negativeness of $\*t,\b\nu$, the regularization term guarantees bounded Hessian modification \eqref{eq:ipm_q_derivative_increment}. Furthermore, assuming $\*r_s\approx\*0$, the Jacobian modification $\eqref{eq:jacobian_mod_ipm}$ will not explode numerically.
\end{theorem}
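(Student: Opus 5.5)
The argument works entry by entry on diagonal matrices. $\*T_\rho=\*T+\rho\*N$ and $\*N$ are diagonal, so both modifications in \eqref{eq:ipm_q_derivative_increment} depend on $(\*t,\b\nu)$ only through the scalar coefficients $\nu_i/(t_i+\rho\nu_i)$ and $(\nu_i g_i+\mu)/(t_i+\rho\nu_i)$. I will bound these scalars using $t_i,\nu_i\ge 0$ and $\rho>0$, and then transfer the bounds to the matrices through the (fixed) constraint Jacobians $\*g_{(\cdot)}$.

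\textbf{Hessian modification.} With $t_i\ge0$ and $\nu_i\ge0$, the denominator satisfies $t_i+\rho\nu_i\ge\rho\nu_i$. Hence
\begin{equation*}
0\le \frac{\nu_i}{t_i+\rho\nu_i}\le \frac{1}{\rho}
\end{equation*}
whenever $\nu_i>0$. The case $\nu_i=0$ needs separate handling: the coefficient is then $0$ if $t_i>0$. If $t_i=\nu_i=0$, the expression is $0/0$; I would interpret it by continuity as a value in $[0,1/\rho]$, or simply note that interior iterates keep $t_i,\nu_i>0$. It follows that $\*0\preceq\*T_\rho^{-1}\*N\preceq\rho^{-1}\*I$. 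Therefore $\Delta\tilde Q_{(\cdot)}=\*g_{(\cdot)}^\top\*T_\rho^{-1}\*N\*g_{(\cdot)}$ is positive semidefinite and bounded above by $\rho^{-1}\*g_{(\cdot)}^\top\*g_{(\cdot)}$, and $\|\Delta\tilde Q\|\le\rho^{-1}\|\*g_{(\cdot)}\|^2$. This bound is independent of how small $\*t$ becomes, which is exactly the role of the regularization. Without $\rho$, the coefficient $\nu_i/t_i$ blows up as $t_i\to0$.

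\textbf{Jacobian modification.} I rewrite the numerator using $\*r_s=\b\nu\odot\*t-\mu\*1$, i.e. $\mu=\nu_i t_i-r_{s,i}$. This gives
\begin{equation*}
\frac{\nu_i g_i+\mu}{t_i+\rho\nu_i}=\frac{\nu_i(g_i+t_i)-r_{s,i}}{t_i+\rho\nu_i}=\frac{\nu_i}{t_i+\rho\nu_i}\,r_{g,i}-\frac{r_{s,i}}{t_i+\rho\nu_i}.
\end{equation*}
The first term is bounded by $|r_{g,i}|/\rho$ using the scalar bound from the Hessian step. The second term vanishes under the assumption $\*r_s\approx\*0$, leaving $|\Delta\tilde Q_{(\cdot)}|\lesssim\rho^{-1}\|\*g_{(\cdot)}\|\,\|\*r_g\|$, which is finite.

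The main obstacle is the second term: $r_{s,i}/(t_i+\rho\nu_i)$ could be large when both $t_i$ and $\nu_i$ are tiny. On the near-central path, however, $\nu_i t_i\approx\mu>0$, so $t_i$ and $\nu_i$ cannot both vanish. By AM–GM, $t_i+\rho\nu_i\ge2\sqrt{\rho\,t_i\nu_i}\approx2\sqrt{\rho\mu}$, so even a small nonzero $r_{s,i}$ gives a bounded contribution. This makes precise the statement's claim that the Jacobian modification ``will not explode numerically''.
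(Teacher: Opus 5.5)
Your proposal is correct and takes the same route as the paper. For the Hessian it uses the entrywise bound $\nu_i/(t_i+\rho\nu_i)\le\rho^{-1}$, which the paper writes as $(\*T\*N^{-1}+\rho\*I)^{-1}\prec\rho^{-1}\*I$; for the Jacobian it substitutes $\mu\approx\nu_i t_i$ to turn the coefficient into $\*T_\rho^{-1}\*N(\*g+\*t)$, bounded by $\rho^{-1}\|\*g+\*t\|$. Your exact split of the $\*r_s$ remainder with the AM--GM bound $t_i+\rho\nu_i\ge 2\sqrt{\rho t_i\nu_i}$, and your handling of $\nu_i=0$ (where the paper tacitly needs $\*N$ invertible), only tighten what the paper leaves at the level of $\approx$.
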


\begin{proof} The boundedness can be deduced by the non-negativeness of $\*t, \b\nu$:
\begin{equation}
\*T_\rho^{-1}\*N=(\*T\*N^{-1}+\rho\*I)^{-1}\prec\rho^{-1}\*I
\end{equation}
\rev{where $\prec$ denotes the element-wise less-equal}. By $\*r_s\approx\*0$, we have $\*T\*N\approx\mu\*I$. The magnitude of the Jacobian modification \eqref{eq:jacobian_mod_ipm} is bounded by
\begin{equation}
    ||\*T_\rho^{-1}\*N\*g+\mu\*T_\rho^{-1}||\approx||\*T_\rho^{-1}\*N(\*g+\*t)||\leq\rho^{-1}||\*g+\*t||.
\end{equation}
\end{proof}

The Hessian boundedness avoids numerical ill-posedness of the original naive IPM. It greatly improves \rev{numerical} robustness with only a very low $\rho$ (by default 1e-8 in our setting). Theorem \ref{th:ipm_alm} shows that when inequality constraints become active, the convergence characteristics of IPM will be similar to vanilla penalty-based equality constraint handling.

\subsection{Adaptive Barrier Strategy}
We implemented the MP-C variant from \cite{schwan2023piqp} to adaptively adjust the barrier parameter $\mu$. In the predictor step, \eqref{eq:sparsekktlinsys} is factorized and the primal steps and IPM variables are computed with $\mu=0$. The new $\mu$ is then computed as 
\begin{equation}\label{eq:mp-c_update_mu}
    \sigma=\text{clip}\bigg(\frac{\sum (\b\nu+\delta\b\nu)^\top(\*t+\delta \*t)}{\sum\b\nu^\top\*t},0,1\bigg)^3, \mu\leftarrow\sigma \frac{\sum \b\nu^\top\*t}{n_\text{ipm}},
\end{equation}
where $\sigma$ is the recentering parameter bounded within $[0,1]$ and $n_\text{ipm}$ is the total dimension of IPM constraints. In the corrector step, the Lagrangian Jacobian is modified with the updated $\mu$ and a corrector term $\delta\b\nu\odot\delta\*t$. The Riccati recursion is performed again to update the value function Jacobians $V_\*x$ with the previous KKT factorization from the predictor step.

Note that in both predictor and corrector steps, a step size upper bound must be computed to ensure the slackness and multipliers are positive:
\begin{subequations}\label{eq:ipm_ls}
    \begin{align}
        \alpha^{\max}&=\max\{\alpha\in[0,1]|\*t + \alpha\delta \*t \ge (1-\tau)\*t\},\\
        \alpha_{\b\nu}^{\max}&=\max\{\alpha_{\b\nu}\in[0,1]|\b\nu + \alpha_{\b\nu}\delta \b\nu \ge (1-\tau)\b\nu\}, 
    \end{align}
\end{subequations}
where $\alpha^{\max}, \alpha_{\b\nu}^{\max}$ are respectively primal (including $\*t$) and inequality dual step size bounds ($\tau=0.995$ in our setting). We also implemented a safeguarding mechanism that will only accept the corrector term if the trial complementarity does not increase \cite{ipopt_adaptive_barrier}.
\subsection{Iterative Refinement}
When the SQP with IPM is close to convergence, its poor numerical condition may cause relatively high error in the stationary condition and slow down convergence to high-accuracy solutions. If necessary, iterative refinement can be employed to reduce the error by setting the RHS $\*k_0$ to be the stationarity residual and performing the recursion similar to the corrector step.
\subsection{Fixed-size Backtracking Line Search}\label{sec:ipm_ls_globalization}
Similar to IPOPT \cite{ipopt_implementation}, our solver applies $\alpha\leq\alpha^{\max}$ to both primal and equality dual steps, and $\alpha_{\b\nu}\leq\alpha_{\b\nu}^{\max}$ only to inequality dual steps. A simple fixed-step-size backtracking line-search is employed, which decreases $\alpha$ by a fixed step $\Delta\alpha:=\alpha^{\max}/n_\text{ls}$ for a maximum number of trials $n_\text{ls}$ when no progress is made in both primal and dual residuals. If it cannot find any descent step, to escape from such local stationary point, the solver 1) scales $\alpha^{\max}$ from \eqref{eq:ipm_ls} to be no higher than a predefined minimal value $\alpha^{\min}$ and 2) resets $\mu=1$ and re-initialize the IPM slacks and multipliers. In practice, we \rev{found $\alpha^{\min}=$ $0.01$ a robust choice.}
\subsection{Equality Constraint Handling via IPM}\label{sec:eq_ipm}
When an equality constraint is potentially rank-deficient as introduced in Remark \ref{rem:rank_def_constr_jac}, projection-based handling might lead to severe difficulties for convergence. Similar to \textit{acados}, such equality constraints can be transformed into box constraints with zero boundaries, \rev{i.e., $\*h+\*t_p=\*0,\*t_n-\*h=\*0,\*t_p,\*t_n\succeq 0$, where $\*t_p,\*t_n$ represent the bilateral slack variables and they will be pushed to zero simultaneously by IPM to ensure feasibility.}

\subsection{Implementation}

The optimizer is implemented in C++ based on \textit{Eigen} \cite{eigenweb} and \textit{BLASFEO} \cite{Frison2019TheBlasfeo}. It also provides a code-generation pipeline for highly-optimized function derivative evaluation, based on \textit{CasADi} \cite{Andersson2019casadi} and \textit{Pinocchio} \cite{carpentier2019pinocchio}. 
\section{Benchmarks and Discussion}\label{sec:benchmark_n_discussion}
\begin{table}[t]
    \centering
    \vspace{1em}
    \caption{Comparison of Solvers and Their Characteristics}
\begin{minipage}{\textwidth}
    \begin{tabular}{lll}
        \hline
        \textbf{Solver} & \textbf{Inner Loop}\footnote{'lin' means only updating zero-order sensitivities. 'quad' means com-\\puting new Newton steps of the LQ subproblems.} & \textbf{Param Update Scheme} \\
        \hline
        Hippo\rev{/fatrop} & iterative refinement (lin) & per outer loop \\
        acados & hpipm qp loops (quad) & reset per outer loop \\
        aligator(p)\footnote{{the serial version of aligator does not have inner loop w.r.t. SQP steps}} & iterative refinement (lin) & if primal res. $>$ tol \\
        mim\_solver & ADMM step (lin) & per fixed num. updates \\
        \hline
    \end{tabular}
    \vspace{-1em}
    \end{minipage}
    \vspace{-2em}
    \label{tab:solver_comparison}
\end{table}
In this section, we benchmark \textit{Hippo} against prior works \rev{\textit{fatrop} \cite{fatrop} (commit 9a290fd
), \textit{acados} \cite{Verschueren2021acados} (commit 6111f01), \textit{aligator} \cite{proxddp,parallelandproximalLQR} (commit af2826), \textit{mim\_solver} \cite{stagewisesqp} (commit bd8852a)} in two representative examples to show the robustness of our solver: 
1) UR5 reaching to test its capability of handling rank-deficient hard constraints, and 2) Go2 locomotion to test its convergence under intermittent contacts in multi-contact settings commonly used in optimization-driven TAMP \cite{dhédin2025simultaneouscontactsequencepatch,taouil2025physicallyconsistenthumanoidlocomanipulation,ciebielski2025taskmotionplanninghumanoid}. Note that these are different from common MPC settings, which are usually simpler with good warm starts and do not require exact resolution (e.g. real-time iteration scheme that emphasizes timing per QP \cite{stagewisesqp,Verschueren2021acados}). For all experiments, we carefully tuned the parameters of \textit{aligator} and \textit{mim\_solver} such that they can solve as many problems as possible. Table. \ref{tab:solver_comparison} provides more details about each solver. \rev{A test is marked `Success` if the $L_\infty$ KKT residual (including constraint violation, stationarity, and complementarity) is below absolute tolerance.} To ensure fair comparison, all tested solvers used linear rollout and line search; \rev{forward whole-body dynamics were} used for problem formulation. For each test, the costs and initialization were unified for all solvers. The hardware platform is AMD Ryzen 9 7945HX. All solvers were compiled with \textit{-march=native}. \textit{Hippo(i)} represents tests with non-dynamics equality constraint handled as Sec. \ref{sec:eq_ipm}. \rev{Note that due to implementation limitations of \textit{fatrop}, its total wall timing reported later will be the sum of solver-only timing and $\frac{\text{function-evaluation timing}}{\text{number of threads}}$ for comparison. However, in practice, the wall time of \textit{fatrop} will be much higher due to the lack of parallelization of the solver and the inefficient evaluation of functions.}
\subsection{UR5 Random Reaching}\label{sec:ur5}
\begin{table}[t]
    \centering
    \vspace{2em}
    \caption{Solver Parameters for UR5 Reaching Benchmark}
    \begin{minipage}{\textwidth}
    \begin{tabular}{lcccc}
        \hline
        \textbf{Solver} & \textbf{Iters} & \textbf{Inner Iters} & \textbf{Inner Tol.} & \textbf{Param}\footnote{barrier parameter for IPM and penalty parameter for ALM/ADMM.} \\
        \hline
        Hippo & 100 & 2 & 1e-10 (abs) & 1 \\
        acados & 100 & 50 & 1e-3 (abs) & 1 \\
        fatrop & 100 & 5 & 1e-8 (abs) & 1 \\
        aligator(s) & 400 & 0 & varying & 0.1 \\
        \rev{mim\_solver(1)/(2)} & 400 & \rev{5(15)/20(60)}\footnote{(..) the number of fixed inner steps to update the penalty parameters.} & 1e-3 (abs) & 0.1\\
        \hline
    \end{tabular}
    \vspace{-1em}
    \end{minipage}
    \label{tab:solver_parameters_ur5}
\end{table}
\begin{figure}[t]
    \centering
    \includegraphics[width=\linewidth]{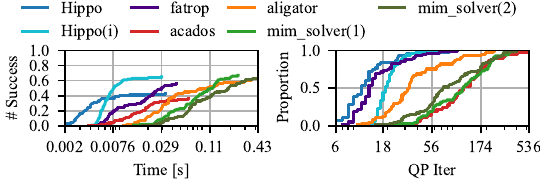}
    \vspace{-2em}
    \caption{\rev{UR5 random SE3 reaching benchmark results. Left plot is the number of sucess v.s. the wall time. Right plot shows the number of QP iterations distribution of the tests.}}
    \vspace{-1em}
    \label{fig:ur5_reach}
\end{figure}
In this experiment, we borrow the UR5 reaching task from \cite{proxddp}, i.e., solving for 100 random target end-effector $SE(3)$ placements in the hemisphere around the robot base frame origin:
\begin{equation}
    \*M^\text{des}\sim \rev{\mathcal{U}([-0.6,0.6]^2 \times [0, 0.6] \times SO(3))}.
\end{equation} 
where $\mathcal{U}$ denotes uniform distribution. Each target is paired with a random initial configuration uniformly sampled from $\mathcal{U}[-1, 1]$ as in \cite{proxddp}. The target is added as a hard constraint $\*m$ for the terminal state, i.e., 
\begin{equation}
\*m:=\log[\*M^\text{des}\ominus\*M_N]=\*0_6,
\end{equation}
where $\*M_N\in SE(3)$ is the terminal end-effector placement and $\ominus$ denotes the $SE(3)$ difference and $\log$ is the logarithmic map from $SE(3)$ to $\mathfrak{se}(3)$ . The trajectory horizon is $N = 50$, with a timestep of $20$ms. State and torque limits are included. Each test runs on 4 threads. The solver settings are shown in Table. \ref{tab:solver_parameters_ur5} and the result is shown in Fig. \ref{fig:ur5_reach}. The termination absolute tolerance is 1e-3. We used the serial version of \textit{aligator} (marked by \textit{(s)}), because we encountered numerical issues with its parallel version (denoted as \textit{aligator(p)}). \rev{For \textit{fatrop}, we use the compiled version of its code-generated \textit{CasADi OptiStack} interface to maximize its performance.} \rev{We used a high iteration limit from \cite{proxddp} for the ALM baselines since they often require more iterations to converge}. Two sets of parameters of \textit{mim\_solver} were used as recorded in Table \ref{tab:solver_parameters_ur5}. 
\begin{table}[t]
    \centering
    \vspace{2em}
    \caption{Statistics for UR5 example}
    \label{tab:ur5_reach}
    \begin{minipage}{\linewidth}
    \centering\rev{
    \begin{tabular}{m{1pt}>{\centering}m{18pt}>{\centering}m{23pt}>{\centering}m{25pt}>{\centering}m{20pt}>{\centering}m{20pt}c}
        \hline
        &\textbf{Hippo} &\textbf{Hippo(i)} & \textbf{fatrop} & \textbf{acados} & \textbf{aligator} & \textbf{mim\_solver} \\
        \hline
         \textbf{\#} &43&65&56 &36/81\footnote{\rev{Only 36 out of the 81 solutions have acceptable slack values ($\le$1e-3).}}&61&(1):67;(2):64\\
        \textbf{t}\footnote{\rev{`\#`: number of solved problems; `\textbf{t}`: average timing per QP (ms)}} & 0.33 & 0.32 & 0.86 & 0.13 & 1.02 & (1):0.61;(2):1.1 \\
        \hline
    \end{tabular}}
    \vspace{-1em}
    \end{minipage}
    \vspace{-2em}
\end{table}
As shown in Fig. \ref{fig:ur5_reach} and Table. \ref{tab:ur5_reach}, \textit{Hippo(i)} solved more problems than \textit{Hippo} due to the potential Jacobian rank-deficiency of $\*m$ but with more iterations. \rev{Benefiting from the \textit{ipopt} globalization scheme, \rev{\textit{fatrop}} solved more tests than \textit{Hippo} but still fewer than \textit{Hippo(i)}.} \textit{acados} only works with \rev{heavily penalized slacks to push the primal residual below the tolerance} and \rev{consequently prone to local infeasible minima.} \textit{aligator} and \textit{mim\_solver} solved similar numbers of problems to \textit{Hippo(i)} \rev{with more iterations}. This demonstrates that our regularized \rev{IPM-based equality-constraint handling} is more robust and efficient, \rev{despite requiring more iterations than pure projection, since} $\mu$ must be gradually driven to zero.

It is worth noting that \textit{Hippo} does not solve QP subproblems the fastest due to the generic nullspace projection. In contrast, \rev{\textit{fatrop} and the \textit{HPIPM} backbone of \textit{acados}} fully exploit the efficient \textit{BLASFEO}; \textit{aligator} and \textit{mim\_solver} utilize \textit{pinocchio} to directly eliminate the contact forces by computing its derivative w.r.t. the kinematic states. The speed advantage of \textit{hippo} is \rev{typically due to its lower number of QP iterations}, since it converges with fewer iterations and does not require exact minimization of subproblems. \rev{\textit{fatrop} also showed a similar speed advantage.} On the contrary, each SQP step of \textit{mim\_solver} and \textit{acados} can take a large number of inner iterations but make little progress, especially if the subproblem is already ill-conditioned, e.g., \textit{mim\_solver(1)/(2)} data in Fig. \ref{fig:ur5_reach}. As discussed in Remark \ref{rem:rank_def_constr_jac}, for ill-posed OCPs, escaping local infeasible points by taking primal steps via globalization is more effective than exact minimization of LQ subproblems. In our practice, we found that \rev{\textit{acados} and \textit{mim\_solver} are} highly sensitive to their inner-loop settings.

\subsection{Go2 2-step Locomotion}
\begin{table}[t]
    \centering
    \vspace{1em}
    \caption{\rev{ALM} Solver Parameters Go2 Locomotion Benchmark}
    \begin{minipage}{\linewidth}
    \centering\rev{
    \begin{tabular}{lcccc}
        \hline
        \textbf{Solver} & \textbf{SQP Iters} & \textbf{Inner Iters} & \textbf{Inner Tol.} & \textbf{Param}\footnote{barrier parameter for IPM and penalty parameter for ALM/ADMM} \\
        \hline
        aligator(p) & 200 & 5 & varying & 0.1 \\
        mim\_solver & 200 & 100(50)\footnote{(..) the number of fixed steps to update the penalty parameters} & 1e-3 (abs) &  0.1\\
        \hline
    \end{tabular}}
    \vspace{-1em}
    \end{minipage}
    \vspace{-2.5em}
    \label{tab:solver_configurations_go2}
\end{table}
\begin{figure*}
\vspace{1em}
    \centering
    \includegraphics[width=\linewidth]{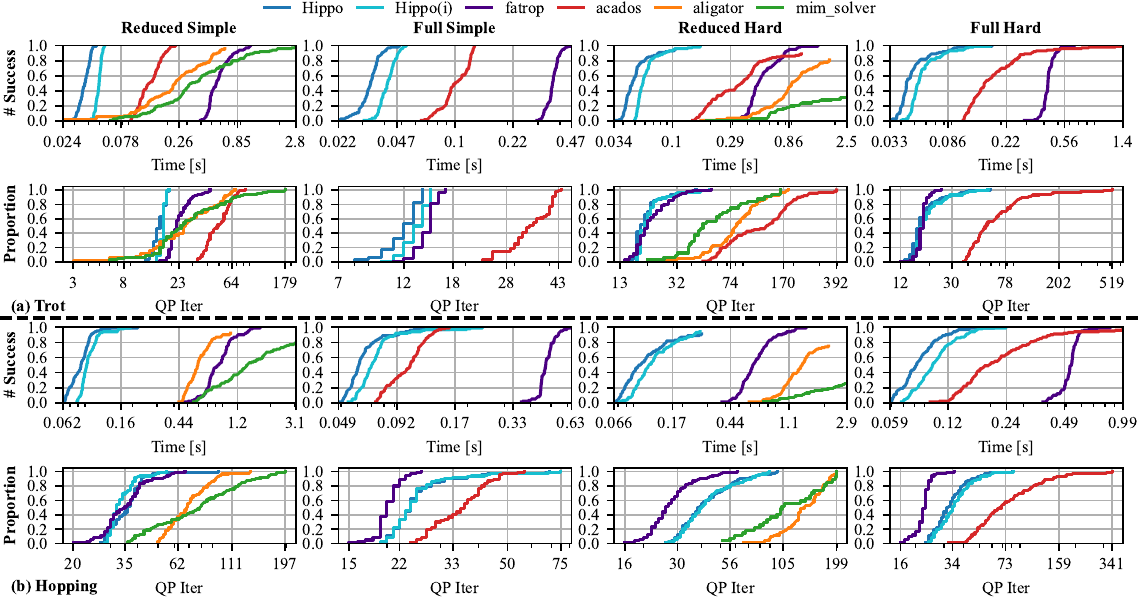}
    \vspace{-2em}
    \caption{\rev{Go2 2-step locomotion benchmark results. 100 random trials are tested for each setting. Missing solver data for a setting indicates that the solver failed all tests for that setting. }}
    \vspace{-1em}
    \label{fig:go2_all}
\end{figure*}
\begin{table}[t]
    \centering
    \caption{Statistics for Go2 example}
    \newcommand{\STAB}[1]{\begin{tabular}{@{}c@{}}#1\end{tabular}}
    \label{tab:go2_loco}
    \begin{minipage}{\linewidth}
    \centering\rev{
    \begin{tabular}{p{25pt}cccccccc}
        \hline
        & \multicolumn{2}{c}{\textbf{RS}\footnote{R: reduced; S: simple; F: full; H: hard. '\#' : number of solved problems. '\textbf{t}': average timing (ms) per QP. }} & \multicolumn{2}{c}{\textbf{FS}}& \multicolumn{2}{c}{\textbf{RH}} & \multicolumn{2}{c}{\textbf{FH}} \\
        \hline
        \textbf{trot}& \textbf{\#} & \textbf{t} & \textbf{\#} & \textbf{t} & \textbf{\#} & \textbf{t} & \textbf{\#} & \textbf{t}\\
        \hline
        \textit{Hippo}& 100        &2.35   & 100   &2.75   & 96   &2.55   & 100&2.67 \\
        \textit{Hippo(i)}& 100     &2.93   & 100   &3.13   & 99   &3.01   & 100&2.95 \\
        \textit{fatrop}& 100       &24.9   & 100   &25.0   & 100   &24.9   & 100&23.9 \\
        \textit{acados}& 100       &2.86   & 100   &2.99   & 89   &2.60   & 100&2.81 \\
        \textit{aligator}& 96     & 8.68  & 0     &-      & 81   &9.95   & 0&- \\
        \textit{mim\_solver}& 98  &13.37  & 0     &-      & 32   &14.96  & 0&- \\
        \hline
        \textbf{hopping}& \textbf{\#} & \textbf{t} & \textbf{\#} & \textbf{t} & \textbf{\#} & \textbf{t} & \textbf{\#} & \textbf{t}\\
        \hline
        \textit{Hippo}& 100        &2.35   & 100   &2.57   & 91   &2.46   & 100&2.57 \\
        \textit{Hippo(i)}& 100     &2.89   & 100   &2.93   & 94   &2.86   & 100&2.93 \\
        \textit{fatrop}& 100       &24.3   & 100   &23.6   & 100   &24.9   & 100&24.4 \\
        \textit{acados}& 0       &-      & 100   &2.97   & 0   &-      & 97&2.86 \\
        \textit{aligator}& 92     &8.58   & 0     &-      & 75   &8.97   & 0&- \\
        \textit{mim\_solver}& 79  &15.42  & 0     &-      & 27    &14.29  & 0&- \\
        \hline
    \end{tabular}
    \vspace{-1em}
    }
    \end{minipage}
    \vspace{-2em}
\end{table}
In this experiment, we formulate a generic quadruped locomotion problem where the robot moves towards two randomly sampled reference base x-y positions within a rectangular area centered around the robot initial position (i.e., the origin). Two gaits (trot and hopping) are tested, and each is added to the OCP as explicit phase contact constraints. The termination absolute tolerance is 1e-3. We provide two difficulty levels of the randomly sampled references. Let $\*r_1, \*r_2$ be the desired base position after the first and last step, respectively. The simple version samples $\*r_2$ from a small region, and then uses the middle point as $\*r_1$, i.e.,
\begin{equation}
    \*r_1=\frac {\*r_2}2, \*r_2\sim \mathcal{U}[-0.5,0.5]\times\mathcal{U}[-0.5,0.5].
\end{equation}
The \rev{hard} version samples from a larger area for two unrelated $\*r_1,\*r_2$, leading to larger movements, i.e.,
\begin{equation}
    \*r_1,\*r_2\sim \mathcal{U}[-1,1]\times\mathcal{U}[-1,1].
\end{equation}
The regularization cost for this example assigns higher weights to base states, emphasizing the tracking of position references. Joint and torque limits are included. No foot references are provided, similarly to \cite{taouil2025physicallyconsistenthumanoidlocomanipulation}. \rev{We tested two constraint settings: 1) "\textit{full}" with friction cone and position-level contact constraint $[v_{c,x}, v_{c,y}, v_{c, z}+100 z_c]^\top=\*0$ where $v_{c,(\cdot)}$ are the linear contact velocities and $z_c$ is the foot height; 2) "\textit{reduced}" without the friction cone and only the zero-contact-velocity constraints, while $z_c$ is penalized.} \rev{The ALM solver parameters are recorded in Table. \ref{tab:solver_configurations_go2}; the IPM ones are the same as Table. \ref{tab:solver_parameters_ur5}.} \rev{To our best effort, \textit{fatrop} is tested with direct \textit{CasADi OptiStack} interface since we were not able to compile the over-sized code-generated file.} The results for this scenario are shown in \rev{Fig. \ref{fig:go2_all}} and Table. \ref{tab:go2_loco}.

\rev{\textit{Hippo} and \textit{Hippo(i)} solved nearly all problems except for the \textit{reduced hard} setting.} \rev{The failures stemmed from the simplistic globalization scheme \ref{sec:ipm_ls_globalization} that results in small steps upon quasi-convergence. Conversely, \textit{fatrop} leveraged the systematic \textit{ipopt} globalization and solved all tests. \textit{acados} solved fewer cases and failed \textit{reduced} hopping tests completely. These observations indicate that reduced OCPs with penalties rather than constraints are prone to degeneracy and require systematic globalization. It explains the performance drop of \textit{Hippo} and the catastrophic failure of \textit{acados}. Moreover, the local minima of such OCPs could be infeasible, as seen in the \textit{acados} results from Table \ref{tab:ur5_reach} of Sec. \ref{sec:ur5}, and thus being less useful for constraint-based planning such as \cite{taouil2025physicallyconsistenthumanoidlocomanipulation, dhédin2025simultaneouscontactsequencepatch,huaijiang_montecarlo}. In strictly constrained \textit{full} settings, \textit{Hippo/Hippo(i)} demonstrated the fastest speed and comparable robustness to \textit{fatrop}, even in hopping tests where \textit{fatrop} took fewer QP iterations. This shows the effectiveness of the parallelism and globalization of our solver in handling constrained OCPs. In contrast, despite a higher iteration budget, \textit{aligator} and \textit{mim\_solver} could only solve parts of the \textit{reduced} tests and failed all \textit{full} settings. These results again corroborate the superior robustness and convergence of IPM over ALM as discussed in Sec. \ref{sec:relatedwork} and \ref{sec:ur5}}.

\section{Conclusion}\label{sec:conclusion}
In this paper, we introduced \textit{Hippo}, a minimalistic implementation of interior-point and projection-based SQP. \textit{Hippo} demonstrated robust convergence in the benchmark of hard examples against existing state-of-the-art solvers without a heavy workload on solver parameter tuning. As a generic constrained trajectory optimizer, \textit{Hippo} could be helpful for data generation for IL and optimization-driven TAMP. For future work, more comprehensive globalization strategies, such as the \textit{IPOPT} feasibility restoration phase and filter line search \cite{ipopt_implementation}, could be integrated into the solver framework to further enhance its robustness.

\bibliography{refs} 
\bibliographystyle{ieeetr}

\end{document}